\def\E{{\mathbb E}}
\def\H{{\mathcal H}}
\def\I{{\mathcal I}}
\def\Q{{\mathcal Q}}
\def\X{{\mathcal X}}
\newcommand{\setc}{{[c]}}
\newcommand{\calq}{{\Q}}
\newcommand{\calqc}{{{\Q} \times [c]}}
\newcommand{\Qb}{{\bar{\Q}}}
\newcommand{\Bb}{{\bar{B}}}
\newcommand{\Gb}{{\bar{G}}}
\newcommand{\Rb}{{\bar{R}}}
\newcommand{\mub}{{\bar{\mu}}}
\newcommand{\hstar}{{h^*}}
\newcommand{\err}{\mbox{err}}
\newcommand{\errcomp}{{\mbox{err}_c}}
\newcommand{\quQ}{{q \in_\mu \Q}}
\newcommand{\eprime}{{\epsilon'}}
\newcommand{\advalg}{{stick-with-it algorithm}}
\newcommand{\Advalg}{{Stick-with-it algorithm}}
\newcommand{\basealg}{{base algorithm}}
\newcommand{\compindep}{{component-independent}}
\newcommand{\spolicy}{{smallest}}
\newcommand{\Spolicy}{{Smallest}}
\newcommand{\lpolicy}{{largest}}
\newcommand{\Lpolicy}{{Largest}}
\newtheorem{cor}[theorem]{Corollary}
\DeclareMathOperator*{\argmax}{arg\,max}
\title{Learning from partial correction}
\begin{document}
\date{}
\maketitle

\begin{abstract}
We introduce a new model of interactive learning in which an expert examines the predictions of a learner and partially fixes them if they are wrong. Although this kind of feedback is not i.i.d., we show statistical generalization bounds on the quality of the learned model.
\end{abstract}

\begin{keywords}
Interactive learning, rates of convergence
\end{keywords}

\section{Introduction}

{\em Partial correction} is a natural paradigm for interactive learning. Suppose, for example, that a taxonomy is to be constructed on a large set of species $\I$, using steps of interaction with an expert. To see how one such step might go, let's say the learner's current model is some hierarchy $h$. Since $h$ is likely too large to be fathomed in its entirety, a small set of species $q \subset \I$ is chosen at random (for instance, $q = \{\mbox{\texttt{dolphin}, \texttt{elephant}, \texttt{mouse}, \texttt{rabbit}, \texttt{whale}, \texttt{zebra}}\}$), and the biologist is shown the restriction of $h$ to just these species, denoted $h(q)$. See Figure~\ref{fig:species}. If this subtree is correct, the biologist accepts it. If not, he or she provides a partial correction in the form of a triplet like $(\{\mbox{\texttt{dolphin}, \texttt{whale}}\}, \mbox{\texttt{zebra}})$, meaning ``there should be a cluster that contains {\texttt{dolphin}} and {\texttt{whale}} but not {\texttt{zebra}}'', that the correct tree must satisfy. This is easier than fixing the entire subtree.

Earlier models of interactive learning have typically adopted a {\em question-answer} paradigm: the learner asks a question and the expert answers it completely. In active learning of binary classifiers, for example, the question is a data point and the answer is a single bit, its label. When learning broader families of structures, however, partial correction can be more convenient and intuitive. In the tree case, the minimal question would consist of three species, and the expert would need to provide the restriction of the target hierarchy to these three leaves. But seeing a larger snapshot is helpful: it provides more context, and thus more guidance about the levels of granularity of clusters; it allows the expert to select one especially egregious flaw to fix, rather than having to correct minor mistakes that might in any case go away once the bigger problems are resolved; and, by allowing choice, it also potentially produces more reliable feedback. Finally, if the subtree is correct, the expert can accept it with a single click, and is saved the nuisance of having to enter it.

\begin{figure}
\begin{center}
\includegraphics[width=2in]{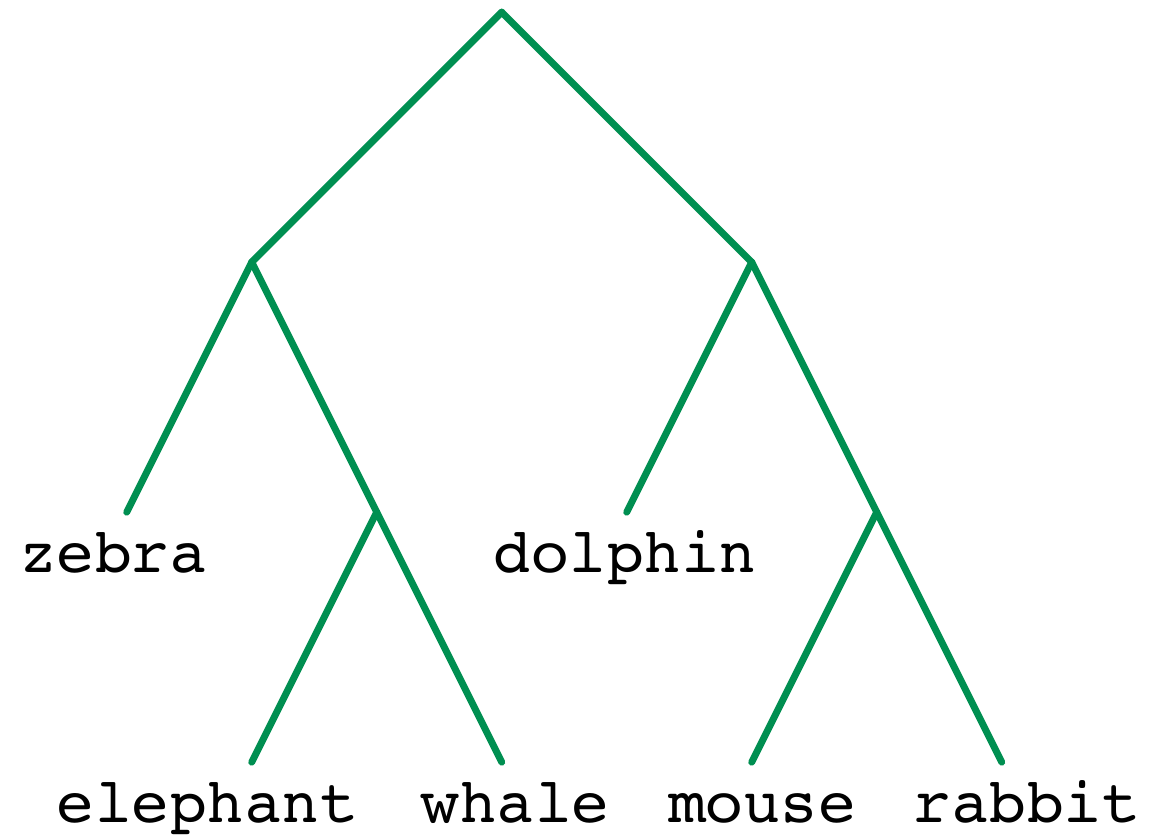}
\hskip.5in
\raisebox{.25in}{\includegraphics[width=1.25in]{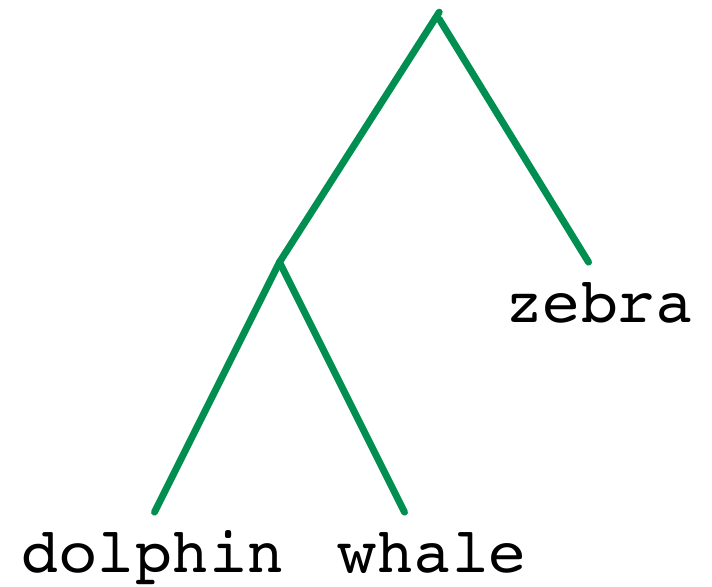}}
\end{center}
\caption{Left: A set $q$ of (say) six species is chosen at random, and the expert is shown $h(q)$, the restriction of the current hierarchy to these species. Right: The expert provides feedback of the form $h^*(x)$, where $x \subset q$ is some subset of three species on which $h$ is not correct, and $h^*$ is the target hierarchy.}
\label{fig:species}
\end{figure}

Formally, we assume that there is a space of structures $\H$ (for instance, trees over a fixed set of species), of which some $h^* \in \H$ is the {\em target}. Any $h \in \H$ can be specified by its answers to a set of questions $\Q$ (for instance, all subsets of six species). On each step of learning:
\begin{itemize}
\item The learner selects some hypothesis $h \in \H$ based on feedback received so far.
\item Some $q \in \Q$ is chosen at random.
\item The learner displays $q$ and $h(q)$ to an expert.
\item If $h(q)$ is correct, the expert accepts it. Otherwise, the expert fixes some part of it.
\end{itemize}
To formalize this partial correction, we assume that each $h(q)$ contains up to $c$ {\em atomic components}, individual pieces that can be corrected. In the tree example, these are triples of species, so $c = {6 \choose 3} = 20$. We index these components as $(q,1), \ldots, (q,c)$. The expert picks some $j$ for which $h(q,j) \neq h^*(q,j)$ and provides $h^*(q,j)$.

One case of technical interest, to which we will later return, is when the components of $q$ are independently chosen from the same distribution. We will call such a distribution on queries {\em \compindep}.

As another example, suppose each $q$ is a sequence of $c$ video frames of the driver's view in a car, and $h^*(q,j)$ is the appropriate driving action for the $j$th frame. On each step of interaction, a human labeler is shown $c$ frames, each labeled with an action, and either accepts all these actions as reasonable or corrects one of them.  
In this case it is unlikely that the distribution on queries is \compindep.

Formally, on each step of interaction, the learner either finds out that its prediction $h(q) = (h(q,1), \ldots, h(q,c))$ is entirely correct, or receives the correct value $h^*(q,j)$ for just one atomic component $j$. This kind of feedback is not i.i.d.: first, the feedback is constrained to be only one component on which $h$ is incorrect if there is such a component; and second, among possibly several such components, the expert chooses one in some arbitrary manner. Ideally, the expert's choices are illustrative and help the learning process, and we will soon see a simple example of this kind. But in this paper we also study the other extreme: is it true that even if the expert adversarially chooses what feedback to give, the same rate of convergence as i.i.d. sampling is always assured? We show that this is indeed the case, and this is a crucial sanity check for the partial correction model.  Furthermore, we show that our algorithms are optimal with respect to natural metrics.

\subsection{Learning procedure}
\label{sec:alg}

Let $\mu$ be a probability distribution on $\Q$, and let $\quQ$ indicate that $q$ is chosen 
independently from $\Q$ according to $\mu$; in the tree example above, $\Q$ 
is all subsets of six species and $\mu$ is the uniform distribution on $\Q$. 
On step $t = 1, 2, \ldots$ of learning,
\begin{enumerate}
\item Learner selects some $h_t \in \H$ consistent with all feedback received so far
\item Choose $\quQ$, where $q$ has $c$ atomic components, $(q,1), \ldots, (q,c)$.
\item Learner displays $q$ and $h_t(q)$ to expert
\item If $h_t(q)$ is correct:
\begin{itemize}
\item Expert feeds back that $h_t(q)$ is correct 
\item Feedback implicitly provides, for all $j \in [c]$, $h^*(q,j) = h_t(q,j)$.
\end{itemize}
Else $h_t(q)$ is incorrect:
\begin{itemize}
\item Expert chooses $1 \leq j \leq c$ for which $h_t(q,j) \neq h^*(q,j)$
\item Expert feeds back $j$ and $h^*(q,j)$.
\end{itemize}
\end{enumerate}

\subsection{Results}

The error of a hypothesis $h \in \H$ can be measured in two ways: in terms of full questions $q \in \Q$,
\begin{equation*} \err(h) = \mbox{Pr}_{\quQ}[h(q) \neq h^*(q)] .\end{equation*}
or in terms of atomic components $(q,j)$:
\begin{equation*} \errcomp(h) = \mbox{Pr}_{\quQ, j \in_R [c]} [h(q,j) \neq h^*(q,j)] .\end{equation*}
These are related by 
$\errcomp(h) \leq \err(h) \leq c \cdot \errcomp(h)$.
Note that $\err(h) \approx c \cdot \errcomp(h)$ if $\mu$ is \compindep\ and $\err(h)$ is small.

An important complexity metric is the expert cost per step to provide feedback.  
This cost can be substantially lower in the new model: The expert can choose a 
component that is easiest to determine is incorrect amongst a set of $c$ components, 
instead of being required to provide feedback for a particular component.
We leave to future work the study of this metric in more detail.

Another crucial complexity metric is the number of steps of feedback required to learn.
We start with a simple one-dimensional example (Section~\ref{sec:lower-bound}) that illustrates how the expert's choice of feedback can significantly affect this metric. In the example, one feedback strategy reduces the number of steps needed for learning by a factor of up to $c$ (so that each feedback component is about as valuable as $c$ randomly chosen components), while a different strategy increases the number of steps by a factor of $\Omega(c)$ (slows down learning).  

The example demonstrates that the number of steps needed to learn can vary by wide margins depending on the expert policy.
Our main results (Theorem~\ref{main theorem}, and the more general Theorem~\ref{stick-with-it-theorem}) show that, despite this, there is a reasonable bound on the number of steps to learn no matter how adversarial the expert policy: For any expert policy, for any $0 < \delta, \epsilon < 1$, with probability $1-\delta$ the \basealg\ of Section~\ref{sec:alg} produces a hypothesis $h$ with $\err(h) \leq \epsilon$ within $O((c/\epsilon) \cdot \log (|\H|/\delta))$ steps of feedback. 
Moreover (Theorem~\ref{thm:generalization-errc}), with probability $1-\delta$, after the same number of steps, {\em all} consistent hypotheses have $\errcomp(h) \leq \epsilon$.
Section~\ref{sec lower} shows that this number of steps is needed for at least some examples.  

In the standard supervised learning model, labeled data is provided in advance, after which a consistent hypothesis is sought. In our protocol, feedback is obtained in steps, and the learner needs to maintain a consistent hypothesis throughout the process. Because it can be expensive to continually select a consistent hypothesis, we introduce the {\em \advalg}, a variant of the \basealg, that might be preferable in practice (Section~\ref{sec:stick-with-it}). Rather than always having to select a hypothesis that is consistent with all feedback received so far at each step, it only updates its hypothesis $O(c)$ times during the entire learning process. 

To obtain these sample complexity bounds, we look at the effective distribution $w_t$ over atomic components $(q,j)$ at each time step $t$, which is a function of previous feedback, the learning algorithm's choice of current model $h_t$, and the expert's criterion for selecting what to correct. This can be quite different from the distribution that would be easy to analyze, where $\quQ$ and $j$ is chosen at random; in particular, $w_t$ can be zero at many $(q,j)$ with $\mu(q) > 0$. Nonetheless, we show that over time, no matter what policy the expert chooses, $w_t$ cannot avoid covering the whole $\Q \times [c]$ space in some suitably amortized sense.

\subsection{Related work}

The growing area of {\em interactive learning} raises many new problems and challenges. Here we have formalized an interactive protocol that is quite natural and intuitive in terms of human-computer interface, but breaks the statistical assumptions that underlie generalization results in other settings like the PAC model of~\citet{V84}. Our key technical contribution is to establish sample complexity bounds in this novel framework.

Most work in interactive learning has employed question-and-answer protocols, in which the learner asks for a specific piece of information, like the label of a point, and gets back the full answer. This is, for instance, the typical setting for active learning of classifiers~\citep{S12}.

One previously-studied model that uses partial correction is {\em learning from equivalence queries} \citep{A88}. In that setting, each round of learning proceeds as follows:
\begin{itemize}
\item the learner suggests a concept
\item the teacher either accepts it, or provides a counterexample
\end{itemize}
Early work focused on Boolean concept classes like disjunctions, while more recently this model has been extended to broader families of models, such as clustering~\citep{BB08,ABV17,EK17}. 

One general issue with the equivalence query model is that the learner is expected to provide the entire concept at each round; this may in general be very large (a clustering of a million points, for instance) or hard to understand (a neural net, say). In our model, on the other hand, the learner only provides a small constant-sized snapshot of the concept on each round, in a readily-understandable form. Because this snapshot is chosen at random, we are faced with a statistical challenge that is entirely absent from the equivalence query model, and our paper is devoted to addressing this technical problem.

\section{An illustrative example}
\label{sec:lower-bound}

Suppose $\X = [0,1]$ and the goal is to learn a threshold classifier:
\begin{equation*} \H = \{h_v: v \in [0,1]\}, \ \ h_v(x) = 1(x > v) .\end{equation*}
Say the target threshold is $0$ (that is, $h^* = h_0$), so that the correct label for all points in $(0,1]$ is 1. If we were learning from random examples $(x, h^*(x))$ then, no matter the distribution on $\X$, after $O(1/\epsilon)$ samples, with probability close to one, all consistent hypotheses $h$ would have $\err(h) \leq \epsilon$. Thus, after $O(1)$ instances, the error would be lower than any pre-specified constant.

\subsection{Uniformly distributed, component-independent queries}

We will consider queries consisting of $c$ points from $\X$; that is, $\Q = \X^c = [0,1]^c$, where we define $h_v(x_1, \ldots, x_c) = (h_v(x_1), \ldots, h_v(x_c))$. Let $\mu$ be the uniform distribution over $\Q$. 
Since the target threshold is 0, the probability that $h_v$ errs on a single component is $\errcomp(h_v) = v$, while the probability that it errs on a query consisting of $c$ components is $\err(h_v) = 1-(1- v)^c$, for any $v \in [0,1]$. Thus $\err(h) \approx c \cdot\errcomp(h)$ if $\err(h)$ is small. 

On each round of interaction, the expert is shown $c$ points in $\X$, along with proposed labels, and provides feedback on at least one of these points. After $t$ such steps, let $v_t$ denote the smallest-valued point in $\X$ on which the expert has provided feedback. Thus, the version space at time $t$ consists exactly of classifiers $h_v$ with threshold $v \leq v_t$. We'll try to understand how the rate of convergence of $v_t$ to zero is affected by $c$ and by the expert labeler's policy for which errors to correct. For simplicity, we will take the learner's hypothesis at time $t$ to be $h_{v_t}$.

Each query consists of $x_1, \ldots, x_c$ chosen uniformly at random from $[0,1]$, and labeled according to $h_{v_t}$. We consider two expert policies:
\begin{itemize}
\item ``\Lpolicy'': the expert picks the largest-valued $x_i$ whose label is incorrect. This corresponds to a natural tendency to fix the biggest mistake, but is the least informative correction.
\item ``\Spolicy'': the expert picks the smallest-valued $x_i$ whose label is incorrect. This is the most informative correction.
\end{itemize}
Based on this feedback, let random variable $V_{t+1}$ denote the learner's updated threshold. What is the expected value of $V_{t+1}$?

{\em When the labeling policy is ``\lpolicy''}: For any $v \in [0,v_t)$, the only way $V_{t+1}$ can exceed $v$ is if either all the $x_i$ are $\geq v_t$ (and are thus correctly labeled by $h_{v_t}$) or if at least one of the $x_i$ lies in $(v,v_t)$ (in which case, there is at least one error, but the largest component in error exceeds $v$):
\begin{align*}
\Pr(V_{t+1} > v \ |\  V_t = v_t)
&=
\Pr(\mbox{all $x_i \geq v_t$}) + (1 - \Pr(\mbox{no $x_i$ in $(v,v_t)$})) \\
&=
(1-v_t)^c + (1 - (1-(v_t-v))^c) 
\end{align*}
Therefore, by calculation,
\begin{equation*}
\E[V_{t+1} \ |\  V_t = v_t] 
= 
\int_{0}^{v_t} \Pr(V_{t+1} > v \ | \ V_t = v_t) dv
= 
v_t - \frac{1 - (1-v_t)^{c} \cdot (1 + c \cdot v_t)}{c+1} .
\end{equation*}

{\em When the labeling policy is ``\spolicy''}: For $v \in [0,v_t]$,  the only way $V_{t+1}$ can exceed $v$ is if none of the $x_i$ lie in $[0,v]$, so $\Pr(V_{t+1} > v \ |\  V_t = v_t) = (1-v)^c$,
whereupon, by a similar integral,
\begin{equation*} \E[V_{t+1} \ | \ V_t = v_t] 
\ = \ 
\frac{1 - (1-v_t)^{c+1}}{c+1} .
\end{equation*}

When $c=1$, the two policies coincide and $\E[V_{t+1} | v_t] = v_t - v_t^2/2$, so the {\em expected instantaneous reduction} in $V_t$, that is $\E[v_t - V_{t+1}]$, from seeing a single-point query is $v_t^2/2$. How does this compare to the expected instantaneous reduction from queries consisting of $c$ points? The {\em ratio} of the expected reduction with $c$-point queries to the expected reduction with $1$-point queries is shown in Figure~\ref{fig:ratios} for $c=4,8$ and for the ``\spolicy'', ``\lpolicy'' expert policies. The ratio is given at each value $v_t$.

\begin{figure}
\begin{center}
\includegraphics[width=2.75in]{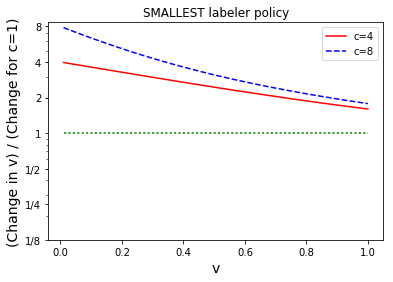}
\hskip.25in
\includegraphics[width=2.75in]{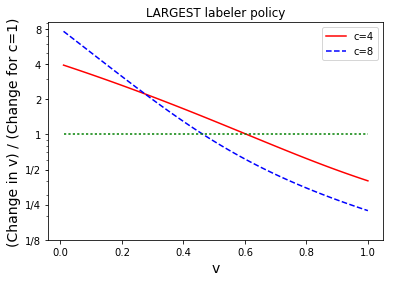}
\end{center}
\caption{Ratio between expected reduction in error from a query consisting of $c$ points versus a single-point query, for two expert policies (``\spolicy'' and ``\lpolicy'') and values $c=4,8$.}
\label{fig:ratios}
\end{figure}

As expected, under the ``\spolicy'' labeling policy, $c$-point queries are always more helpful than single-point queries. Under the ``\lpolicy'' policy, this is true only when $v_t$ is sufficiently small. In either case, when $v_t$ gets close to zero, the single label yielded by a $c$-point query is roughly as informative as $c$ random labeled points. This can be checked directly from the expressions above.

This example shows that the rate of convergence of learning by partial correction depends on the labeler's choice of which errors to fix. Even in this simple setting, different labeler policies can speed up or slow down convergence by factors up to $c$. We now formalize lower bounds of this type.

\subsection{A lower bound on component-level error}

We continue with the one-dimensional example, with the same hypothesis class and the same target, but we now turn to distributions that are not component-independent.

As before, we will consider a learner that begins with a threshold of 1, and at any given time, chooses the largest threshold consistent with all feedback so far: namely, the smallest-valued point for which it has received feedback.

\subsubsection{A single query, repeated}

To start with an especially simple case, say the distribution $\mu$ over $\Q$ is supported on a single point, $(1/c, 2/c, \ldots, 1)$. Suppose moreover that the expert labeler behaves as follows: when presented with a labeling of the points $1/c, 2/c, \ldots, 1$, he/she always chooses to ``correct the most glaring flaw'', that is, the highest value for which a 0 label is suggested.

It is clear that $x = 1$ is labeled in the first round, $x = (c-1)/c$ in the second round, $x = (c-2)/c$ in the third round, and so on. The labeler's behavior is hardly pathological. And yet, it takes $c/2$ rounds of interaction to bring the error down to $1/2$. If the feedback were on random components, then $O(1)$ rounds would have been sufficient.

\subsubsection{Lower bound}

Pick any $\epsilon > 0$, and now consider a distribution $\mu$ over $\Q$ that is supported on just two points:
\begin{equation*}
\begin{array}{cc}
\left( \frac{1}{2c}, \frac{2}{2c}, \ldots, \frac{1}{2}\right) & \mbox{probability $2\epsilon$} \\ [.25em]
\left( \frac{1}{2} + \frac{1}{2c}, \frac{1}{2} + \frac{2}{2c}, \ldots, 1 \right) & \mbox{probability $1 - 2\epsilon$} 
\end{array}
\end{equation*}
Any hypothesis with $\mbox{err}_c(h_v) \leq \epsilon$ must have $v \leq 1/4$. In order to achieve this, the learner must see the first point at least $c/2$ times, which requires seeing $\Omega(c/\epsilon)$ samples overall, with high probability.

We have established the following.
\begin{theorem}
There is a concept class $\H$ of VC dimension 1 such that for any $\epsilon > 0$, it is necessary to have $\Omega(c/\epsilon)$ rounds of feedback in order to be able to guarantee that with high probability, all hypotheses $h$ consistent with this feedback have $\mbox{err}_c(h) \leq \epsilon$.
\label{thm:lower-bound-errc}
\end{theorem}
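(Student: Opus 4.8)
The plan is to trace the deterministic evolution of the specified learner's version space under the stated ``most glaring flaw'' expert and the already-constructed two-point distribution $\mu$, and then to close with an elementary concentration bound. Write $q_{\mathrm{lo}}$ for the support point with coordinates $1/(2c),\dots,c/(2c)=1/2$ (probability $2\epsilon$) and $q_{\mathrm{hi}}$ for the one with coordinates $(c{+}1)/(2c),\dots,2c/(2c)=1$ (probability $1-2\epsilon$). The class $\H=\{h_v:v\in[0,1]\}$ of threshold classifiers has VC dimension exactly $1$. Since $h^*=h_0$, a component $(q,j)$ is mislabeled by $h_v$ precisely when the $j$th coordinate of $q$ is $\le v$, so for $v\le 1/2$ one computes $\errcomp(h_v)=2\epsilon\cdot|\{k:k/(2c)\le v\}|/c=2\epsilon\lfloor 2cv\rfloor/c$, whence $\errcomp(h_v)\le\epsilon$ forces $\lfloor 2cv\rfloor\le c/2$. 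The learner's version space after feedback is exactly $\{h_v:v<s\}$, where $s$ is the smallest point at which feedback (always of the form ``$h^*=1$ here'') has been received; so the target conclusion --- every consistent $h$ has $\errcomp(h)\le\epsilon$ --- holds only once $s$ has been driven down to at most $(\lfloor c/2\rfloor+1)/(2c)$, and as long as $s$ exceeds this value some consistent $h_v$ with $v$ just below $s$ has $\errcomp(h_v)=2\epsilon\lfloor 2cv\rfloor/c>\epsilon$.

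Next I would argue that $s$ can drop only by repeatedly querying $q_{\mathrm{lo}}$, one ``notch'' $1/(2c)$ at a time. Every coordinate of $q_{\mathrm{hi}}$ exceeds $1/2$, so a correction prompted by a $q_{\mathrm{hi}}$-query can never bring $s$ below $1/2$; and once $s\le 1/2$ the learner's threshold lies below every coordinate of $q_{\mathrm{hi}}$, so $q_{\mathrm{hi}}$-queries are simply accepted and inert. For $q_{\mathrm{lo}}$: the stated learner always takes a threshold in the top notch of its version space, i.e. $v_t\in[s{-}1/(2c),\,s)$; the coordinates of $q_{\mathrm{lo}}$ that $h_{v_t}$ mislabels are exactly those $\le v_t$, and the ``most glaring flaw'' expert corrects the largest of these, namely $s-1/(2c)$ --- so each $q_{\mathrm{lo}}$-query decrements $s$ by exactly one notch. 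Starting from the initial threshold $1$, the first $q_{\mathrm{lo}}$-query corrects coordinate $1/2$ (making $s=c/(2c)$) regardless of any preceding $q_{\mathrm{hi}}$-queries, and the $i$th thereafter brings $s$ to $(c-i+1)/(2c)$. Thus at least $\lceil c/2\rceil$ queries of $q_{\mathrm{lo}}$ are needed before $s\le(\lfloor c/2\rfloor+1)/(2c)$; until then a consistent hypothesis with $\errcomp>\epsilon$ survives. (The one delicate point is the half-open version space: ``largest consistent threshold'' has no exact maximizer, but any $v_t$ in the top notch gives the stated correction, so the dynamics are unaffected.)

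Finally, in $T$ rounds the number of $q_{\mathrm{lo}}$-queries is $\mathrm{Binomial}(T,2\epsilon)$ with mean $2\epsilon T$; if $T\le c/(8\epsilon)$ this mean is at most $c/4$, so by Markov's inequality the chance of seeing at least $c/2$ of them is at most $1/2$. Equivalently, after $T\le c/(8\epsilon)$ rounds, with probability at least $1/2$ the feedback still leaves a consistent hypothesis with $\errcomp>\epsilon$, so one cannot guarantee the desired conclusion with high probability; hence $\Omega(c/\epsilon)$ rounds are necessary, which is the claim (a Chernoff bound in place of Markov pushes the failure probability to $1-e^{-\Omega(c)}$). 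The only real work is the deterministic bookkeeping of the middle paragraph --- verifying that the expert's natural policy is forced to descend $q_{\mathrm{lo}}$'s coordinates one at a time and that $q_{\mathrm{hi}}$-queries become useless once $s\le 1/2$; I expect that, rather than the routine probabilistic step, to be the main (if mild) obstacle.
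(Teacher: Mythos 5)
Your proposal is correct and follows essentially the same route as the paper: the same two-point distribution supported on $q_{\mathrm{lo}}$ and $q_{\mathrm{hi}}$, the same ``largest mislabeled point'' expert and largest-consistent-threshold learner, the observation that each $q_{\mathrm{lo}}$-query lowers the smallest feedback value by only one notch of $1/(2c)$ so that about $c/2$ such queries are needed, and a concentration step showing $\Omega(c/\epsilon)$ rounds are required to see $q_{\mathrm{lo}}$ that often. Your write-up simply supplies details the paper leaves implicit (the floor computation for $\errcomp$, the inertness of $q_{\mathrm{hi}}$ once $s\le 1/2$, and the top-notch fix for the open version space), so it matches the paper's argument.
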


\section{Main result}

For each $h \in \H$, let 
\begin{eqnarray*}
B(h) & = & \{q \in \calq: h \mbox{ is incorrect on } q\}, \\
G(h) & = & \{q \in \calq: h \mbox{ is correct on } q\}
\end{eqnarray*}
Note that $\err(h) = \mu(B(h))$ is the probability that $h$ is incorrect on a randomly chosen query.
We say that hypothesis $h$ is $(1-\epsilon)$-good if $\mu(B(h)) \le\epsilon$.
On input $(\epsilon,\delta)$, the goal is to find an $h \in \H$
that is $(1-\epsilon)$-good with probability at least $1-\delta$.

\begin{theorem}
\label{main theorem}
Let $\ell = \log(|\H|/\delta)$, let $\eprime = \epsilon/2$, and
let $N = c \cdot \left(\frac{\ell}{\eprime}+1 \right).$
The \basealg\ of Section~\ref{sec:alg} produces an $(1-\epsilon)$-good hypothesis within $2 \cdot N$
steps with probability at least $1-\delta$.
\end{theorem}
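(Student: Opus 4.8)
The plan is a union bound over $\H$: I will show that with probability $\ge 1-\delta$ the feedback gathered in $2N$ steps is inconsistent with every $h$ having $\err(h)>\epsilon$, so that whatever consistent hypothesis the \basealg\ currently holds must be $(1-\epsilon)$-good. To organize this I classify each step $t$ by the quality of the learner's \emph{own} hypothesis $h_t$: call $t$ \emph{productive} if $\mu(B(h_t))\le\eprime$ and \emph{wasteful} otherwise. Among the $2N$ steps either at least $N$ are productive or at least $N$ are wasteful, and I treat the two cases separately; the factor $2$ in $2N$ is exactly what makes this dichotomy available.

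The productive case is the easy one. Fix any $h$ with $\mu(B(h))>\epsilon$. On a productive step the query $q$ lies in $G(h_t)$ with probability $1-\err(h_t)\ge 1-\eprime$, and when it does the expert's acceptance reveals the \emph{entire} vector $h^*(q)$, which is inconsistent with $h$ exactly when $q\in B(h)$; since $\Pr_{q\sim\mu}[\,q\in B(h)\cap G(h_t)\,]\ge \mu(B(h))-\mu(B(h_t))>\epsilon-\eprime=\eprime$, each productive step eliminates $h$ with conditional probability at least $\eprime$. Hence (by a standard stochastic-domination argument for the number of productive steps until elimination) $h$ survives all $N$ productive steps with probability at most $(1-\eprime)^N\le e^{-\eprime N}=e^{-c(\ell+\eprime)}\le e^{-\ell}=\delta/|\H|$, and a union bound over the at most $|\H|$ hypotheses of error $>\epsilon$ shows that with probability $\ge 1-\delta$ none of them survives.

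So the real work is to show that the wasteful case --- at least $N$ wasteful steps --- occurs with probability at most $\delta$. On a wasteful step a correction occurs with probability $\err(h_t)>\eprime$, and a correction both eliminates $h_t$ for good and reveals one atomic component $h^*(q,j)$ that was not previously known (it cannot have been, since the consistent $h_t$ disagrees with $h^*$ there). The obstacle is that an adversarial expert will always correct a component that is as uninformative as possible, so a naive accounting only eliminates one hypothesis at a time, which is useless when $|\H|$ is large. The resolution I would pursue is the amortized-coverage idea sketched in the introduction: each query carries only $c$ atomic components, so a single query can shield the learner from progress for at most $c$ corrections before it is fully pinned down; meanwhile a hypothesis of error exceeding $\eprime$ forces queries drawn from a set of $\mu$-mass exceeding $\eprime$ to keep arriving, so the expensive corrections cannot stay concentrated on a bounded set of queries. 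Quantifying this --- tracking how the effective component distribution $w_t$ sweeps out $\Q\times[c]$ over time, bundling it into a potential that measures the unrevealed component-mass, and applying a martingale tail bound to conclude that a productive step is forced within $N$ wasteful steps no matter how adversarially $j$ is chosen --- is the heart of the argument and the step I expect to be genuinely delicate. Everything else is the routine union bound and Chernoff estimate above, and the constants ($\eprime=\epsilon/2$, the factor $2$, the $+1$ in $N$) are exactly what the two estimates respectively demand.
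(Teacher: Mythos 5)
There is a genuine gap, and it sits exactly where you yourself locate the ``heart of the argument.'' First, your target is stronger than the theorem, and your reduction to it is flawed. The theorem only asserts that at some step $t \le 2N$ the algorithm's own currently selected hypothesis $h_t$ satisfies $\err(h_t)\le\epsilon$; it does not assert that every $h$ with $\err(h)>\epsilon$ becomes inconsistent within $2N$ steps (the paper proves an ``all consistent hypotheses'' statement only for the component-level error $\errcomp$, Theorem~\ref{thm:generalization-errc}). More importantly, your dichotomy fails as stated: a step is ``wasteful'' for you whenever $\err(h_t)>\eprime=\epsilon/2$, but a run in which the current hypothesis always has error lying in $(\eprime,\epsilon]$ makes all $2N$ steps wasteful with probability one while the theorem's conclusion holds trivially, since such hypotheses are already $(1-\epsilon)$-good. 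Hence the claim you defer to --- ``at least $N$ wasteful steps occurs with probability at most $\delta$'' --- is false in general and cannot be the route; and if you instead put the threshold at $\epsilon$, the productive case needs no analysis at all (one step with $\err(h_t)\le\epsilon$ already proves the theorem), so the entire burden falls on the case you have not proved.

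That unproved case is essentially the whole of the paper's proof, and your sketch (``each query carries only $c$ components,'' a potential for unrevealed mass, ``a martingale tail bound'') stops short of the actual mechanism and, via per-query counting, would at best give a bound scaling with the number of queries rather than with $\ell$. The paper instead: (i) defines an effective sampling distribution $w_t$ on $\Qb$ which, on accepted queries, spreads the mass $\mu(q)$ fictitiously across components by a water-filling rule so that weight is only placed where $W_t(q,j)\le t\mu(q)/c$ (Lemma~\ref{good wt lemma}); (ii) shows by a product bound plus union bound that with probability $1-\delta$ every still-consistent $h$ has $W_t(\Bb(h))<\ell$ (Lemma~\ref{lemma:generalization}) --- the only probabilistic step; (iii) caps each coordinate at $\tau\mu(q)$ with $\tau=N/c$ and shows $w_t$ puts weight at most $\eprime$ on the oversampled set (Lemmas~\ref{lemma:good-set-not-oversampled} and~\ref{lemma:bad-set-not-oversampled}), so after the first $N$ steps the capped potential is at least $(1-\eprime)N$ out of a maximum of $N$; and (iv) observes that in the next $N$ steps any step whose current hypothesis has $\err\ge\epsilon$ pushes the capped potential up by more than $\eprime$, which the remaining headroom $\eprime N$ cannot sustain $N$ times, so some step has a $(1-\epsilon)$-good current hypothesis. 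This last phase is a deterministic amortization conditioned on the event of (ii); no martingale argument is required, and the bound $W(\Bb(h_t))<\ell$ for the hypothesis the learner actually selects is what makes the count independent of $|\Q|$. None of these ingredients appear in your proposal, so the proof is incomplete at precisely the step that matters; your productive-case observation (an accepted query reveals all of $h^*(q)$ and kills any $h$ erring there, at rate at least $\epsilon-\eprime$) is correct but is neither used by nor needed for the theorem.
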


It is interesting to compare Theorem~\ref{main theorem} to standard generalization bounds in the case when $\mu$ is \compindep.
Theorem~\ref{main theorem} shows that after at most $2 \cdot N = O(c \cdot \log(|\H|)/\epsilon)$ steps 
the output hypothesis $h$ satisfies $\err(h) \le \epsilon$, which implies (roughly) that $\errcomp(h) \le \epsilon/c$ if $\mu$ is \compindep.
Under standard bounds, this is the same number of steps that would be needed to achieve component error $\epsilon/c$ 
when each question is a single component and the expert provides complete feedback for each question.
Of course, the bound of Theorem~\ref{main theorem} applies whether or not $\mu$ is \compindep.

The remainder of this section concentrates on proving Theorem~\ref{main theorem}.
The analysis procedes in two phases: the first phase considers the first $N$ steps,
and the second phase considers the subsequent $N$ steps. 
Writing $[c]$ for $\{1,2,\ldots, c\}$, let
\begin{eqnarray*}
\Qb & = & \calqc,  \\
\Bb(h) & = & \{ (q,j) \in \Qb: q \in B(h) \mbox{ and } h(q,j) \not= \hstar(q,j) \}, \\
\Gb(h) & = &  G(h) \times \setc.
\end{eqnarray*}

\subsection{Effective sampling distribution}

Let $h_t$ be the current hypothesis at the beginning of step $t$. The feedback at time $t$ will depend on $h_t$, on the query $q$ (chosen from distribution 
$\mu$), and on the expert's choices. For purposes of the analysis, we define the {\em effective sampling distribution} $w_t$ over $\Qb = \calqc$, as follows:
\begin{itemize}
\item For all $(q,j) \in \Bb(h_t)$, let $\gamma(q,j)$ denote the conditional probability 
that the expert provides feedback on $(q,j)$ when query $q$ is made. Define
$w_t(q,j) = \mu(q) \cdot \gamma(q,j)$.
\item For all $q \in G(h_t)$ calculate $w_t(q,1), \ldots, w_t(q,c)$, summing to $\mu(q)$, 
as specified below in Lemma~\ref{good wt lemma}. 
\end{itemize}
Finally, let \begin{equation*}W_t(q,j) = w_1(q,j) + \cdots + w_t(q,j)\end{equation*} 
denote the sum of the individual distributions up to step $t$.
Note that at each step $t$, for each $q \in \calq$, 
we have $w_t(q,\setc) = w_t(q, 1) + \cdots + w_t(q,c) = \mu(q)$ and thus $W_t(q,\setc) = t \cdot \mu(q)$. 

\begin{lemma}
\label{good wt lemma}
For all $q \in G(h_t)$, non-negative values for $w_t(q,1), \ldots, w_t(q,c)$, summing to $\mu(q)$,
can be calculated such that the following property holds: for any $j$ with $w_t(q,j) > 0$,
\begin{equation*}W_t(q,j) = W_{t-1}(q,j) + w_t(q,j) \leq \frac{t \cdot \mu(q)}{c}.\end{equation*}
\end{lemma}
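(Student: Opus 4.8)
The plan is to build the weights by a reverse water-filling procedure. Fix $q \in G(h_t)$; there is nothing to prove when $\mu(q) = 0$, so assume $\mu(q) > 0$. Picture the $c$ components of $q$ as $c$ bins, bin $j$ currently holding $W_{t-1}(q,j)$ units of water, and think of $w_t(q,1),\ldots,w_t(q,c)$ as a way of pouring in a further $\mu(q)$ units. The goal is to pour so that every bin that receives a positive amount ends at level at most $t\mu(q)/c$.

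First I would define the fill level. The map $L \mapsto \sum_{j=1}^{c} \max(L - W_{t-1}(q,j),\,0)$ is continuous and nondecreasing, equals $0$ for $L \le \min_j W_{t-1}(q,j)$, and tends to $\infty$ as $L \to \infty$; hence there exists $L \ge 0$ with $\sum_{j=1}^{c} \max(L - W_{t-1}(q,j),\,0) = \mu(q)$. Set $w_t(q,j) = \max(L - W_{t-1}(q,j),\,0)$. These quantities are nonnegative and sum to $\mu(q)$, as required.

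Next I would read off the two cases. If $w_t(q,j) > 0$ then $W_{t-1}(q,j) < L$ and $W_t(q,j) = W_{t-1}(q,j) + w_t(q,j) = L$; if $w_t(q,j) = 0$ then $W_t(q,j) = W_{t-1}(q,j) \ge L$. In either case $W_t(q,j) \ge L$, so summing over $j$ gives $\sum_{j=1}^{c} W_t(q,j) \ge cL$. On the other hand $\sum_{j=1}^{c} W_t(q,j) = W_{t-1}(q,\setc) + \mu(q) = (t-1)\mu(q) + \mu(q) = t\mu(q)$, using the invariant $W_{t-1}(q,\setc) = (t-1)\mu(q)$ noted just before the lemma (itself an easy induction on $t$, since at every step the weights assigned to a fixed $q$ sum to $\mu(q)$). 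Combining the two estimates gives $L \le t\mu(q)/c$, and therefore every $j$ with $w_t(q,j) > 0$ satisfies $W_t(q,j) = L \le t\mu(q)/c$, which is exactly the claim.

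I do not expect a real obstacle here beyond careful bookkeeping. The one point that needs attention is that bins already sitting above the target level $t\mu(q)/c$ must be permitted to receive nothing, which is precisely what water-filling arranges; and one must invoke the running invariant $W_{t-1}(q,\setc) = (t-1)\mu(q)$ to convert the geometric fact ``$L$ is no larger than the average bin level'' into the stated bound.
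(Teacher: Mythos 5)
Your water-filling construction is correct and is essentially the paper's own argument in a slightly different guise: the paper sorts the entries $W_{t-1}(q,j)$ and greedily fills them up to the fixed threshold $t\cdot\mu(q)/c$, using the invariant $W_{t-1}(q,[c])=(t-1)\cdot\mu(q)$ to show the capacity below the threshold is at least $\mu(q)$, while you equalize the receiving bins at a common level $L$ and bound $L$ by the same averaging fact. Since both proofs add mass only to the lowest entries and rest on the same invariant, this is the same approach, just formalized via an intermediate-value water level rather than a discrete greedy fill.
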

\begin{proof}
We begin with some intuition. For all $q \in G(h_t)$, we want to add a total of $\mu(q)$ to the entries $W_{t-1}(q,j)$ for $j \in [c]$. We will only add to entries that are $\leq t \cdot \mu(q)/c$, and we will make sure not to exceed this threshold. We can do this because $W_{t-1}(q,[c]) = (t-1) \cdot \mu(q)$, and thus the average $W_{t-1}(q,j)$ is $(t-1) \cdot \mu(q) / c$.

Formally, we can choose $w_t(q,1), \ldots, w_t(q,c)$ as follows.
Let $j_1,\ldots,j_c$ be an ordering of the elements of $\setc$ such that 
\begin{equation*}W_{t-1}(q,j_1) \le W_{t-1}(q,j_2) \le \cdots \le W_{t-1}(q,j_c).
\end{equation*}
Let
\[ i^* = \argmax_i \{ W_{t-1}(q,j_i) \le  t \cdot \mu(q)/c \},\] 
\[ \Delta^+ = \sum_{i \le i^*} t \cdot \mu(q)/c-W_{t-1}(q,j_i), \]
and 
\[\Delta^- = \sum_{i > i^*} t \cdot \mu(q)/c-W_{t-1}(q,j_i).\]
From the above it follows that $\Delta^+ + \Delta^- =  \mu(q)$ and $\Delta^- \le 0$, and thus $\Delta^+ \ge \mu(q)$.
This ensures that if we start by ``filling up'' entry $j_1$ to threshold $t \cdot \mu(q)/c$, then entry $j_2$, and so on, then we can fill up a total of $\mu(q)$ without any entry exceeding $t \cdot \mu(q)/c$, as described in the following algorithm:

Initialize $\Delta = \mu(q)$, $w_t(q,j_1) = \cdots = w_t(q,j_c) = 0$.

Repeat the following for $i=1,\ldots,c$ until $\Delta = 0$: 

\hspace{0.5in}
Reset $w_t(q,j_i) = \min\left\{\frac{t\cdot \mu(q)}{c} - 
W_{t-1}(q,j_i), \Delta \right\}$ 

\hspace{0.5in}
Reset $\Delta = \Delta - w_t(q,j_i)$.
\end{proof}

\subsection{Eliminating inconsistent hypotheses}

Next, we use a large deviation argument to assert that any suboptimal hypothesis $h$ will be eliminated once the region in which it is incorrect, $\Bb(h)$, has been sufficiently sampled. In what follows, recall that $\ell = \log (|\H|/\delta)$.
\begin{lemma}
With probability at least $1-\delta$, the following holds for all $h \in \H$: if
there is a step $t$ at which $W_t(\Bb(h)) \geq \ell$, then $h$ is not consistent with
the feedback received by the end of that step.
\label{lemma:generalization}
\end{lemma}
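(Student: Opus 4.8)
The plan is to fix a single hypothesis $h \in \H$, show that the claimed failure event — namely, that there is a step $t$ with $W_t(\Bb(h)) \geq \ell$ but $h$ is still consistent with all feedback through step $t$ — has probability at most $\delta/|\H|$, and then take a union bound over the (at most $|\H|$) hypotheses. So the real content is a large-deviation bound for one fixed $h$. Note first that $h$ becomes inconsistent the moment the expert provides feedback on any component $(q,j)$ with $h(q,j) \neq \hstar(q,j)$, i.e.\ on any $(q,j) \in \Bb(h)$. Conversely, $h$ stays consistent through step $t$ precisely if no such feedback has occurred in steps $1,\dots,t$.

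The key observation linking this to $W_t$ is the definition of the effective sampling distribution: on any step $s$ at which the current hypothesis is $h$ (so $h_s = h$), the conditional probability — given the query $q$ and the history — that the feedback lands on $(q,j) \in \Bb(h)$ is exactly $\mu(q)\gamma(q,j) = w_s(q,j)$; summing over $(q,j) \in \Bb(h)$, the conditional probability that step $s$ renders $h$ inconsistent is at least $w_s(\Bb(h))$. The subtlety is that $h$ is only the current hypothesis on some steps, not all of them, and $w_s$ for a step where $h_s \neq h$ is not governed by this interpretation; but this only helps us — on steps where $h_s \neq h$ the value $w_s(\Bb(h))$ is some number in $[0,\mu(B(h))]$ that we are effectively ``wasting,'' and the survival of $h$ through step $t$ is dominated by a product of factors $\prod_s (1 - w_s(\Bb(h)))$ over the steps on which $h$ was current. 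To make this rigorous I would set up a martingale / Chernoff-type argument: conditioned on all of $h_1,\dots,h_t$ and all queries and expert tie-breaking rules being arbitrary (even adversarial, chosen in advance or adaptively), the indicator that $h$ survives step $s$ given survival through step $s-1$ has conditional expectation at most $1 - w_s(\Bb(h))$ on steps where $h_s = h$. Hence
\begin{equation*}
\Pr[h \text{ consistent through step } t] \;\leq\; \prod_{s \le t} \bigl(1 - w_s(\Bb(h))\bigr) \;\leq\; \exp\!\Bigl(-\sum_{s \le t} w_s(\Bb(h))\Bigr) \;=\; e^{-W_t(\Bb(h))},
\end{equation*}
using $1-x \le e^{-x}$, where the middle inequality silently incorporates the fact that on steps with $h_s \ne h$ the corresponding factor is still $\le 1$ so dropping it only increases the bound, and on steps with $h_s = h$ we get the genuine contraction. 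If $W_t(\Bb(h)) \ge \ell = \log(|\H|/\delta)$ at some step $t$, this probability is at most $\delta/|\H|$.

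Finally I would be careful about the ``there exists a step $t$'' quantifier: the failure event for $h$ is really ``$h$ survives up to the \emph{first} step $\tau$ at which $W_\tau(\Bb(h)) \ge \ell$,'' and $\tau$ is a stopping time with respect to the natural filtration (it is determined by $h_1,\dots,h_\tau$, which in turn determine $w_1,\dots,w_\tau$ — note $W_\tau(\Bb(h))$ depends only on the hypotheses chosen, not on the realized feedback). So it suffices to apply the above display at $t = \tau$, which is legitimate by optional stopping or simply because we can condition on $\tau$ and the trajectory of hypotheses. Then a union bound over $h \in \H$ gives that with probability at least $1 - \delta$, every $h$ with $W_t(\Bb(h)) \ge \ell$ at some step $t$ is inconsistent by the end of that step, which is the claim. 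The main obstacle is the bookkeeping around the fact that $w_s$ is only the ``true'' feedback distribution for $h$ on steps where $h$ is the current hypothesis — one has to argue cleanly that steps where $h_s \ne h$ cannot rescue $h$ and that the $W_t$ accounting, which sums $w_s(\Bb(h))$ over \emph{all} steps, still upper-bounds the survival probability; phrasing everything in terms of conditional expectations given the full (possibly adversarial) sequence of hypotheses and queries is the way to sidestep this.
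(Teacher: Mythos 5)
There is a genuine gap, and it sits exactly at the point your proposal tries to wave through. The lemma quantifies over \emph{all} $h \in \H$, including hypotheses that are never (or rarely) selected as the current hypothesis, and $W_t(\Bb(h)) = \sum_{s \le t} w_s(\Bb(h))$ sums over \emph{all} steps. Your argument only establishes the per-step contraction $\Pr[\mbox{$h$ survives step $s$} \mid \mbox{history}] \le 1 - w_s(\Bb(h))$ on steps where $h_s = h$; for steps with $h_s \neq h$ you explicitly declare $w_s(\Bb(h))$ to be ``wasted'' and say the corresponding factor is merely $\le 1$. But that is the wrong direction: what your reasoning actually yields is $\Pr[\mbox{$h$ consistent through }t] \le \exp\bigl(-\sum_{s \le t:\, h_s = h} w_s(\Bb(h))\bigr)$, which can be far larger than $\exp(-W_t(\Bb(h)))$ — for a hypothesis $h$ that is never the current hypothesis it gives the vacuous bound $1$ even though $W_t(\Bb(h))$ may well exceed $\ell$. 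Inserting the factors $(1-w_s(\Bb(h)))$ for steps with $h_s \ne h$ into your product makes the right-hand side \emph{smaller}, so those factors need their own justification; without it the first inequality in your display is unproved and the conclusion does not follow.

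The missing idea is that the effective sampling distribution is constructed precisely so that the contraction holds at \emph{every} step for \emph{every} $h$, not just for $h = h_s$. Condition on the history and split the query $q$ by the current hypothesis $h_s$: (i) if $q \in G(h_s)$, the expert's acceptance implicitly reveals $h^*(q,j)$ for all $j \in [c]$ (step 4 of the protocol), so any $h$ with $q \in B(h)$ is eliminated with conditional probability $1$ given that $q$ is drawn; since $w_s(q,1)+\cdots+w_s(q,c) = \mu(q)$, the elimination probability $\mu(q)$ dominates $w_s(\Bb(h) \cap (\{q\}\times[c]))$. (ii) If $q \in B(h_s)$, then $w_s(q,\cdot)$ is supported on $\Bb(h_s)$, where $w_s(q,j) = \mu(q)\gamma(q,j)$ is exactly the probability that $h^*(q,j)$ is explicitly revealed, which eliminates every $h$ with $(q,j) \in \Bb(h)$. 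Summing over $q$ gives that the conditional probability of receiving feedback inconsistent with $h$ at step $s$ is at least $w_s(\Bb(h))$, for arbitrary $h$. This is the one-line assertion in the paper's proof (``the probability that this happens at step $t$ is at least $w_t(\Bb(h))$''), and it is what licenses the product bound over all $t$ steps and hence $\exp(-W_t(\Bb(h))) \le \delta/|\H|$; the rest of your outline (stopping at the first $t$ with $W_t(\Bb(h)) \ge \ell$, union bound over $\H$) matches the paper.
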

\begin{proof}
Pick any $h \in \H$. It is eliminated if feedback is received on any $(q,j) \in \Bb(h)$.
The probability that this happens at step $t$ is at least $w_t(\Bb(h))$.

Let $t$ be the first step at which $W_t(\Bb(h)) \geq \ell$. The probability that $h$ is not
eliminated by the end of step $t$ is at most
\begin{equation*} (1-w_1(\Bb(h)))\cdot (1-w_2(\Bb(h))) \cdots (1-w_t(\Bb(h))) \leq \exp(-W_t(\Bb(h))) \leq \exp(-\ell) = \frac{\delta}{|\H|}.\end{equation*}
Taking a union bound over $\H$, with probability at least $1-\delta$, any hypothesis $h$ is eliminated from the version space by the step at which $W_t(\Bb(h)) \geq \ell$.
\end{proof}

We hereafter assume $W_{t-1}(\Bb(h_t)) < \ell$ if $h_t$ is selected as the 
current hypothesis at the beginning of step $t$.

\subsection{\Advalg}
\label{sec:stick-with-it}

There are some practical issues with the \basealg\ of Section~\ref{sec:alg}.  
One issue is that at the beginning of every step,
a hypothesis needs to be selected that is consistent with all feedback so far.
A second issue is that a separate procedure is needed to evaluate whether a given hypothesis is $(1-\epsilon)$-good,
in order to terminate the \basealg\ with a hypothesis that is verified to be $(1-\epsilon)$-good.


We introduce the {\em \advalg}, a generalization of the \basealg,
that addresses these issues.
We use an integer $k \ge 1$ to describe the following simple change to the \basealg:
Instead of selecting a current hypothesis at the beginning of each time step (that is consistent with all feedback received), a current hypothesis is selected each $k$ steps. 
Once selected, it is used as the current hypothesis for the next $k$ consecutive steps,
even if it becomes inconsistent with feedback received during these $k$ steps.  
(This is where ``stick-with-it'' comes from.) 

\begin{theorem}
\label{stick-with-it-theorem}
Let $\ell = \log(|\H|/\delta)$, let $\eprime = \epsilon/2$, and
let $N = c \cdot \left(\frac{\ell}{\eprime}+k \right).$
The \advalg\ produces an $(1-\epsilon)$-good hypothesis within $2 \cdot N$
steps with probability at least $1-\delta$.
\end{theorem}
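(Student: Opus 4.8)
The plan is to re-run the analysis behind Theorem~\ref{main theorem} with the block length $k$ carried through the bookkeeping; the \basealg\ is exactly the case $k=1$, so all that is needed is to locate where $k$ enters and to check it propagates to the stated $N$. The two structural lemmas are untouched by the change. Lemma~\ref{good wt lemma} concerns only how the analysis allocates the ``good'' mass given whatever hypothesis is currently in force, and is silent about how often that hypothesis is reselected. Lemma~\ref{lemma:generalization} is likewise indifferent to the reselection schedule: its proof uses only that, at each step $s$, the probability of receiving feedback (explicit on a bad query, implicit via acceptance of a good one) landing in $\Bb(h)$ is at least $w_s(\Bb(h))$, which holds for the \advalg\ exactly as for the \basealg. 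So I would keep the effective sampling distribution $w_t$ and the conclusion that, on a $(1-\delta)$ event, any $h$ with $W_t(\Bb(h)) \ge \ell$ at some step $t$ is inconsistent by the end of that step.

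The one place the ``stick-with-it'' modification bites is the assumption ``$W_{t-1}(\Bb(h_t)) < \ell$ whenever $h_t$ is the hypothesis in force at step $t$.'' For the \basealg\ this is immediate, since $h_t$ is reselected at step $t$ from the feedback through step $t-1$. For the \advalg, $h_t$ was last reselected at the start of the current block, say step $t' \le t$ with $t - t' \le k-1$, and consistency there gives only $W_{t'-1}(\Bb(h_t)) < \ell$. Over the rest of the block the hypothesis stays equal to $h_t$, so for each $s$ with $t' \le s \le t-1$ we have $w_s(\Bb(h_t)) = \err(h_t) \le 1$ --- the equality because for a query on which $h_t$ errs its whole mass $\mu(q)$ is placed on components in $\Bb(h_t)$, so $w_s(\Bb(h_t)) = \mu(B(h_t)) = \err(h_t)$. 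Summing the at most $k-1$ increments yields the replacement bound
\begin{equation*}
W_{t-1}(\Bb(h_t)) \;<\; \ell + (k-1) \;<\; \ell + k
\end{equation*}
at every step $t$. Equivalently, a hypothesis that is not $(1-\eprime)$-good can remain in force for at most $\ell/\eprime + k$ steps rather than $\ell/\eprime + 1$: each such step pushes $W_t(\Bb(h_t))$ up by more than $\eprime$ toward the elimination threshold $\ell$, and the ``$+k$'' bounds how far the current block can overshoot before the now-stale hypothesis is dropped. Carrying this weaker bound through the proof of Theorem~\ref{main theorem} then replaces the additive $c$ in $N = c(\ell/\eprime + 1)$ by $ck$, giving $N = c(\ell/\eprime + k)$, while the two-phase schedule and the rest of the argument go through verbatim.

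The main obstacle I anticipate is purely careful accounting: ensuring the block slack enters the final sample size \emph{additively}, as $ck$, rather than multiplicatively as something like $c(\ell + k)/\eprime$. This amounts to pinpointing in the proof of Theorem~\ref{main theorem} exactly which estimate contributes the ``$+1$'' --- presumably the one-step staleness of $h_t$ --- and checking that replacing that single step by a block of up to $k$ steps is still absorbed by the slack already present in that proof (notably the choice $\eprime = \epsilon/2$). A minor secondary point is to reconcile the two phases of length $N$ with blocks of length $k$: one can assume $N$ is a multiple of $k$, or simply note that the argument only ever uses $W_{t-1}(\Bb(h_t)) < \ell + k$ at each step, which holds regardless of where block boundaries fall (a partial block only makes the bound tighter).
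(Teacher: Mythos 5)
Your structural reading is right: Lemma~\ref{good wt lemma} and Lemma~\ref{lemma:generalization} are untouched by the block schedule, and the only place $k$ enters is the staleness of the consistency guarantee for the in-force hypothesis. But the quantitative fix you propose does not give the theorem as stated, and you have in effect flagged this yourself without resolving it. Your replacement bound $W_{t-1}(\Bb(h_t)) < \ell + k$ amounts to inflating the elimination threshold from $\ell$ to $\ell+k$; pushed through the rest of the argument, the step where one bounds the weight that a bad hypothesis can place on already-saturated components then requires the saturation level $\tau$ to satisfy roughly $(\ell+k)/(\tau-1) \le \eprime$, i.e.\ $\tau \gtrsim (\ell+k)/\eprime$, which is exactly the multiplicative form $N \approx c(\ell+k)/\eprime$ you worry about, and for $k\ge 2$ this is strictly worse than the stated $N = c(\ell/\eprime + k)$ by about $ck/\eprime$. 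The ``careful accounting'' you defer to is the actual content of the proof, and it is not a cosmetic relocation of a ``$+1$''.

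The paper's handling is different and is what produces the additive $+k$: it never inflates $\ell$. In the lemma bounding $w_t(\Bb(h_t)\cap\Rb_t)$ (the oversampled bad components), one steps back to time $t-k$, before $h_t$ was selected. Since each component's cumulative weight grows by at most $\mu(q)$ per step, any $(q,j)$ with $W_t(q,j) > \tau\mu(q)$ already had $W_{t-k}(q,j) > (\tau-k)\mu(q)$; and consistency of $h_t$ at its selection time gives $W_{t-k}(\Bb(h_t)) < \ell$. Combining, $w_t(\Bb(h_t)\cap\Rb_t) \le \sum \mu(q) < \ell/(\tau-k)$, so it suffices to take $\tau - k = \ell/\eprime$, i.e.\ $\tau = \ell/\eprime + k$ and $N = c\tau$, with the rest of the two-phase potential argument (fill the capped potential $\widehat{W}$ to $(1-\eprime)N$ in Phase~1; in Phase~2 every step with $\err(h_t) \ge 2\eprime$ adds at least $\eprime$ to a potential capped at $N$) unchanged. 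One further caution: your intermediate reformulation ``a non-good hypothesis stays in force at most $\ell/\eprime + k$ steps'' is a per-hypothesis duration bound and cannot by itself bound the total number of steps, since the learner may cycle through many bad hypotheses; it is precisely the capped potential over $\Qb$ that amortizes across hypotheses, so you cannot substitute that heuristic for the lemma above.
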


Subsections~\ref{phase 1 sec} and~\ref{phase 2 sec} 
below provide the proof of Theorem~\ref{stick-with-it-theorem}, 
which immediately also proves Theorem~\ref{main theorem} (taking $k=1$). 
Setting \[k =\frac{\ell}{\eprime} = \frac{2 \cdot \ell}{\epsilon} \] 
results in a \advalg\ with the following properties:
\begin{itemize}
\item
The total number of steps is at most $2 \cdot N \le \frac{8 \cdot c \cdot \ell}{\epsilon}.$
\item
A new current hypothesis is selected at most 
$\frac{2 \cdot N}{k} \le 4 \cdot c$ times, and thus there are at most $4 \cdot c$ different current hypotheses. 
\item
A new current hypothesis remains the current hypothesis for enough steps 
to determine if it is $(1-\epsilon)$-good, and if it is $(1-\epsilon)$-good 
then the \advalg\ terminates. 
\end{itemize}
The \advalg\ is close-to-optimal in the following metrics (see Section~\ref{sec lower}):
\begin{itemize}
\item
The bound on the number of steps, including steps to verify that 
the output hypothesis is $(1-\epsilon)$-good
\item
The bound on the number of times the current hypothesis needs to be updated
\end{itemize}

\subsection{Analysis for Phase 1}
\label{phase 1 sec}

Consider a first phase consisting of the first $N$ steps. Let $\tau$ be a threshold value. 
We will think of an atomic question $(q,j)$ as having been adequately sampled when $W_t(q,j)$ reaches $\tau \cdot \mu(q)$. Define
$$ \Rb_t = \{(q,j): W_t(q,j) > \tau \cdot \mu(q) \}$$
to be the set of $(q,j)$ that have been oversampled by the end of time step $t$. We will see that for a suitable setting of $\tau$, the effective sampling distribution $w_t$ at time $t$ places little weight on $\Rb_t$. To show this, we partition $\Rb_t$ into $\Gb(h_t) \cap \Rb_t$ and $\Bb(h_t) \cap \Rb_t$.

\begin{lemma}
If $c \cdot \tau \geq N$ then $w_t(\Gb(h_t) \cap \Rb_t) = 0$ for any $t \leq N$.
\label{lemma:good-set-not-oversampled}
\end{lemma}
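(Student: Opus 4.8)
The plan is to show that if $(q,j)$ belongs to $\Gb(h_t)$ (so $h_t$ is \emph{correct} on $q$, meaning $q \in G(h_t)$) and $w_t(q,j) > 0$, then $(q,j)$ cannot be in $\Rb_t$, i.e. $W_t(q,j) \le \tau \cdot \mu(q)$. Combined with the fact that $w_t$ is zero on all of $\Gb(h_t)$ except where $w_t(q,j)>0$, this immediately gives $w_t(\Gb(h_t)\cap\Rb_t)=0$. The key input is Lemma~\ref{good wt lemma}: by construction of the effective sampling distribution on the ``good'' part, whenever $w_t(q,j) > 0$ for $q \in G(h_t)$ we have the guarantee $W_t(q,j) \le t\cdot\mu(q)/c$.

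So first I would fix $t \le N$ and an arbitrary $(q,j) \in \Gb(h_t)$ with $w_t(q,j) > 0$; the only contributors to $w_t(\Gb(h_t)\cap\Rb_t)$ are such pairs, since $w_t(q,j)=0$ otherwise. Since $q \in G(h_t)$, Lemma~\ref{good wt lemma} applies and yields $W_t(q,j) \le t\cdot\mu(q)/c \le N\cdot\mu(q)/c$, using $t\le N$. Now invoke the hypothesis $c\cdot\tau \ge N$, equivalently $N/c \le \tau$, to conclude $W_t(q,j) \le \tau\cdot\mu(q)$. Hence $(q,j)\notin\Rb_t$ by the definition of $\Rb_t$ (which requires the strict inequality $W_t(q,j) > \tau\cdot\mu(q)$). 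Therefore no pair with $w_t(q,j)>0$ lies in $\Gb(h_t)\cap\Rb_t$, and summing $w_t$ over this set gives $0$.

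There is essentially no obstacle here: the lemma is a one-line consequence of the careful construction in Lemma~\ref{good wt lemma}, and the only things to be mildly careful about are (i) matching the strict-versus-weak inequality in the definition of $\Rb_t$ against the weak bound from Lemma~\ref{good wt lemma}, and (ii) remembering that $w_t$ vanishes on the part of $\Gb(h_t)$ where the Lemma~\ref{good wt lemma} bound is not asserted, so those pairs contribute nothing regardless of whether they lie in $\Rb_t$. The substance of the argument — that the good-component mass is spread out so as never to oversample — was already done in proving Lemma~\ref{good wt lemma}; this lemma just records the consequence for the chosen threshold $\tau$.
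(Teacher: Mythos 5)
Your argument is correct and is essentially identical to the paper's own proof: both apply Lemma~\ref{good wt lemma} to any $(q,j)\in\Gb(h_t)$ with $w_t(q,j)>0$ to get $W_t(q,j)\le \frac{t}{c}\mu(q)\le\frac{N}{c}\mu(q)\le\tau\mu(q)$, so such a pair is not in $\Rb_t$ and the remaining pairs contribute zero weight. No gaps.
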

\begin{proof}
Pick any $(q,j) \in \Gb(h_t)$. If $w_t(q,j) > 0$ then we have from Lemma~\ref{good wt lemma} that
$$ W_t(q,j) \leq \frac{t}{c} \cdot \mu(q) \leq \frac{N}{c} \cdot \mu(q) \leq \tau \cdot \mu(q).$$
Thus any such $(q,j)$ is not in $\Rb_t$.
\end{proof}

\begin{lemma}
At any time $t$, if current hypothesis $h_t$ was selected within the previous $k$ steps then 
\[ w_t(\Bb(h_t) \cap \Rb_t) \leq \frac{\ell}{\tau - k}.\]
\label{lemma:bad-set-not-oversampled}
\end{lemma}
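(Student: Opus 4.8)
The plan is to bound $w_t(\Bb(h_t) \cap \Rb_t)$ by isolating the contribution of those atomic components that have recently become over-sampled and controlling how much weight they can carry. Fix the time $t$, and let $s \leq t$ be the step at which the current hypothesis $h_t = h_s$ was selected; by hypothesis $t - s < k$. The key observation is that the effective sampling distribution only places weight on $\Bb(h_s)$ at steps $s, s+1, \dots, t$, and the total mass it deposits there over any window of length $k$ is at most $k \cdot \max_{s \le r \le t} w_r(\Bb(h_s)) $ in the crude bound, but more usefully: for any $(q,j) \in \Bb(h_s)$, the accumulated weight satisfies $W_t(q,j) = W_{s-1}(q,j) + \sum_{r=s}^{t} w_r(q,j)$, and the tail sum is at most $(t-s+1)\cdot w_r(q,j)$-type quantities bounded by $\mu(q)$ per step, so $W_t(q,j) \le W_{s-1}(q,j) + k\,\mu(q)$.

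The next step is to use this to decompose $\Rb_t$ restricted to $\Bb(h_t)$. If $(q,j) \in \Bb(h_t) \cap \Rb_t$, then $W_t(q,j) > \tau\,\mu(q)$, and combined with the previous inequality this forces $W_{s-1}(q,j) > (\tau - k)\,\mu(q)$. Now I would invoke the consistency assumption recorded just after Lemma~\ref{lemma:generalization}: since $h_t = h_s$ was selected as a consistent hypothesis at step $s$, we must have $W_{s-1}(\Bb(h_s)) < \ell$. Summing $W_{s-1}(q,j) > (\tau-k)\mu(q)$ over the relevant $(q,j)$ and comparing with this bound will yield that the total $\mu$-mass of queries $q$ contributing to $\Bb(h_t)\cap\Rb_t$ is at most $\ell/(\tau-k)$, and since $w_t(q,j) \le \mu(q)$ (indeed $w_t(q,[c]) = \mu(q)$), this controls $w_t(\Bb(h_t)\cap\Rb_t)$ itself. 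Concretely, $w_t(\Bb(h_t)\cap\Rb_t) \le \sum_{(q,j)\in \Bb(h_t)\cap\Rb_t} w_t(q,j) \le \mu(\{q : \exists j, (q,j)\in\Bb(h_t)\cap\Rb_t\}) \le \ell/(\tau - k)$, using $W_{s-1}(\Bb(h_t)) \ge \sum (\tau-k)\mu(q)$ over those same $q$ and $W_{s-1}(\Bb(h_t)) < \ell$.

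The main obstacle I anticipate is getting the bookkeeping exactly right on which steps deposit weight into $\Bb(h_t)$: one must be careful that $W_{s-1}$ refers to weight accumulated \emph{before} $h_s$ was chosen (so the consistency bound $W_{s-1}(\Bb(h_s)) < \ell$ applies cleanly), while the extra $k\,\mu(q)$ slack accounts for steps $s$ through $t$ during which $h_s$ is stuck as the current hypothesis and $w_r$ can load up to $\mu(q)$ per step onto $\Bb(h_s)$. A secondary subtlety is that the effective distribution on $\Bb(h_r)$ for the \emph{stuck} hypothesis at intermediate steps $r \in (s,t]$ is defined via the expert's conditional feedback probabilities $\gamma$, which always satisfy $w_r(q,j) \le \mu(q)$ regardless of $\gamma$; I would state this explicitly. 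Once these two points are pinned down, the chain of inequalities above gives exactly $\ell/(\tau - k)$.
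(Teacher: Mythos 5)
Your proposal is correct and takes essentially the same route as the paper: every $(q,j)\in\Bb(h_t)\cap\Rb_t$ must already have had accumulated weight exceeding $(\tau-k)\cdot\mu(q)$ before $h_t$ was selected (since each step adds at most $\mu(q)$ to any single component), and the consistency condition $W_{s-1}(\Bb(h_t))<\ell$ at selection time, together with $w_t(q,[c])=\mu(q)$, yields the bound $\ell/(\tau-k)$ — the paper phrases this with $W_{t-k}$ in place of your $W_{s-1}$, which is the same estimate since $s-1\ge t-k$. The only blemish is your passing remark that weight is deposited on $\Bb(h_s)$ only at steps $s,\ldots,t$, which is false in general but never used: your actual chain relies only on $W_t(q,j)\le W_{s-1}(q,j)+k\cdot\mu(q)$, which is valid.
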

\begin{proof}
For any $(q,j) \in \Rb_t$, we have 
$$ W_{t-k}(q,j) \geq W_t(q,j) - k \cdot \mu(q) > (\tau - k) \cdot \mu(q).$$
Thus
\[ w_t(\Bb(h_t) \cap \Rb_t) 
= \sum_{(q,j) \in \Bb(h_t) \cap \Rb_t} w_t(q,j)
\leq \sum_{(q,j) \in \Bb(h_t) \cap \Rb_t} \mu(q)
< \frac{1}{\tau-k} \cdot W_{t-k}(\Bb(h_t))
< \frac{\ell}{\tau -k},
\]
where the last inequality is because $W_{t-k}(\Bb(h_t)) < \ell$ from Lemma~\ref{lemma:generalization} when $h_t$ is selected.
\end{proof}

With Lemmas~\ref{lemma:good-set-not-oversampled} and \ref{lemma:bad-set-not-oversampled} in mind, we set
\begin{equation*}\tau = \frac{N}{c} = \frac{\ell}{\eprime} + k\end{equation*}
whereupon the following is immediate.
\begin{lemma}
\label{lemma:overall_benefit2}
At any step $t \le N$, $w_t(\Rb_t) \le \eprime$.
\end{lemma}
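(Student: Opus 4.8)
The plan is to decompose $w_t(\Rb_t)$ along the partition of $\Qb$ determined by whether $h_t$ is correct on the query, and then to invoke the two preceding lemmas after checking that both of their hypotheses are met by the chosen value $\tau = N/c = \ell/\eprime + k$.

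First I would note that the effective sampling distribution $w_t$ is supported entirely on $\Gb(h_t) \cup \Bb(h_t)$. For $q \in G(h_t)$ all of $\{q\}\times[c]$ lies in $\Gb(h_t)$, so there is nothing to check there. For $q \in B(h_t)$ the definition puts $w_t(q,j) = \mu(q)\gamma(q,j)$, where $\gamma(q,j)$ is the conditional probability that the expert corrects component $j$; since the expert only ever corrects a component on which $h_t$ is wrong, $\gamma(q,j) = 0$ whenever $h_t(q,j) = h^*(q,j)$, so all the mass of $w_t$ on $\{q\}\times[c]$ sits on $\Bb(h_t)$. As $\Gb(h_t)$ and $\Bb(h_t)$ are disjoint, this gives the exact decomposition $w_t(\Rb_t) = w_t(\Gb(h_t)\cap\Rb_t) + w_t(\Bb(h_t)\cap\Rb_t)$.

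Next I would bound the two terms. For the first, the choice $\tau = N/c$ gives $c\cdot\tau = N \ge N$, so the hypothesis of Lemma~\ref{lemma:good-set-not-oversampled} holds and that lemma yields $w_t(\Gb(h_t)\cap\Rb_t) = 0$ for every $t \le N$. For the second, by construction of the \advalg\ the current hypothesis $h_t$ is always one that was selected within the previous $k$ steps (trivially so for the \basealg, where $k=1$), so Lemma~\ref{lemma:bad-set-not-oversampled} applies and gives $w_t(\Bb(h_t)\cap\Rb_t) \le \ell/(\tau - k)$. Since $\tau - k = \ell/\eprime$, the right-hand side is exactly $\eprime$, and adding the two bounds gives $w_t(\Rb_t) \le \eprime$.

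The only point needing any care — and it is more a bookkeeping observation than a real obstacle — is the first step: verifying that $w_t$ assigns no weight to pairs $(q,j)$ with $q \in B(h_t)$ but $h_t(q,j) = h^*(q,j)$, so that $\Gb(h_t)\cap\Rb_t$ and $\Bb(h_t)\cap\Rb_t$ together capture all of the mass $w_t$ places on $\Rb_t$. Once that is noted, the lemma reduces to substituting $\tau = N/c = \ell/\eprime + k$ into the conclusions of Lemmas~\ref{lemma:good-set-not-oversampled} and~\ref{lemma:bad-set-not-oversampled}.
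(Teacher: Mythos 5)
Your proof is correct and follows essentially the same route as the paper, which simply declares the lemma immediate from Lemmas~\ref{lemma:good-set-not-oversampled} and~\ref{lemma:bad-set-not-oversampled} once $\tau = N/c = \ell/\eprime + k$ is fixed. Your extra observation that $w_t$ puts no mass on pairs $(q,j)$ with $q \in B(h_t)$ but $h_t(q,j) = h^*(q,j)$ is a sound and worthwhile bit of bookkeeping that the paper leaves implicit.
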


Let $\widehat{W}_t(q,j) = \min\{W_t(q,j), \tau \cdot \mu(q)\}$. 
Summing over all $(q,j)$, we have $\widehat{W}_t(\Qb) \leq N$.
\begin{cor}
\label{corr:overall_benefit2}
$\widehat{W}_N(\Qb) \ge (1-\eprime) \cdot N$.
\end{cor}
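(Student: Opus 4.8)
The plan is to track how much ``new'' (uncapped) mass $\widehat{W}_t$ accumulates on each of the $N$ steps of Phase~1. The key observation is that at step $t$, the effective sampling distribution $w_t$ is a probability-weighted (by $\mu$) distribution over $\Qb$ with total mass $w_t(\Qb) = \sum_q w_t(q,[c]) = \sum_q \mu(q) = 1$. When we update $W_{t-1} \mapsto W_t = W_{t-1} + w_t$, the increase in $\widehat{W}$, namely $\widehat{W}_t(\Qb) - \widehat{W}_{t-1}(\Qb)$, equals the portion of the mass $w_t$ that lands on atomic questions $(q,j)$ that are \emph{not} already oversampled, i.e.\ on $\Qb \setminus \Rb_{t-1}$ — plus possibly a bit more bookkeeping at the boundary, but crucially it is at least $1 - w_t(\Rb_{t-1})$, and in fact at least $w_t(\Qb) - w_t(\Rb_t) = 1 - w_t(\Rb_t)$ once one is careful about which step's oversampled set to use. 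By Lemma~\ref{lemma:overall_benefit2}, $w_t(\Rb_t) \le \eprime$ for every $t \le N$, so each step contributes at least $1 - \eprime$ to $\widehat{W}(\Qb)$.

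Concretely, I would argue as follows. Fix a step $t \le N$ and an atomic question $(q,j)$. If $(q,j) \notin \Rb_t$, then $W_t(q,j) \le \tau \cdot \mu(q)$, so $\widehat{W}_t(q,j) = W_t(q,j) = W_{t-1}(q,j) + w_t(q,j) \ge \widehat{W}_{t-1}(q,j) + w_t(q,j)$ (using $\widehat{W}_{t-1}(q,j) \le W_{t-1}(q,j)$). If $(q,j) \in \Rb_t$, then at worst $\widehat{W}_t(q,j) \ge \widehat{W}_{t-1}(q,j)$ since $\widehat{W}$ is monotone nondecreasing in $t$. Summing over all $(q,j) \in \Qb$ and separating the two cases,
\begin{equation*}
\widehat{W}_t(\Qb) - \widehat{W}_{t-1}(\Qb)
\;\ge\;
\sum_{(q,j) \notin \Rb_t} w_t(q,j)
\;=\;
w_t(\Qb) - w_t(\Rb_t)
\;=\;
1 - w_t(\Rb_t)
\;\ge\;
1 - \eprime,
\end{equation*}
where the last inequality is Lemma~\ref{lemma:overall_benefit2}. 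Since $\widehat{W}_0(\Qb) = 0$, telescoping over $t = 1, \ldots, N$ gives $\widehat{W}_N(\Qb) \ge (1 - \eprime) \cdot N$, which is the claim.

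The one place that needs care — and which I expect to be the only real subtlety — is the monotonicity/boundary argument in the $(q,j) \in \Rb_t$ case and making sure the telescoping is clean: I need $\widehat{W}_t(q,j) \ge \widehat{W}_{t-1}(q,j)$ to hold unconditionally, which is immediate because $W_t(q,j) \ge W_{t-1}(q,j)$ and $x \mapsto \min\{x, \tau\mu(q)\}$ is nondecreasing, so the per-coordinate cap never causes a decrease. Everything else is just the fact that $w_t$ has total $\mu$-mass $1$ and the already-established bound $w_t(\Rb_t) \le \eprime$. No large-deviation or combinatorial work is needed here; the corollary is essentially a one-line consequence of Lemma~\ref{lemma:overall_benefit2} once the telescoping is set up.
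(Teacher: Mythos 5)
Your proof is correct and is exactly the argument the paper leaves implicit when it calls the corollary ``an immediate consequence'' of Lemma~\ref{lemma:overall_benefit2}: since $w_t(\Qb)=1$ at every step, the capped total $\widehat{W}_t(\Qb)$ gains at least $1-w_t(\Rb_t)\ge 1-\eprime$ per step, and telescoping over the $N$ steps of Phase~1 gives the bound. Your handling of the boundary case $(q,j)\in\Rb_t$ via monotonicity of the per-coordinate cap is the right way to make the one-line claim rigorous.
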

\begin{proof}
An immediate consequence of Lemma~\ref{lemma:overall_benefit2}.
\end{proof}

\subsection{Analysis for Phase 2}
\label{phase 2 sec}

We now finish the proof of 
Theorem~\ref{stick-with-it-theorem}. 
\begin{proof}
Consider a second phase of $N$ additional steps.
Let $h_t$ be the current hypothesis for one of these steps.
If $\mu(B(h_t)) \ge 2 \cdot \eprime$ then $\mu(B(h_t)) - \eprime \ge \eprime$,
and Lemma~\ref{lemma:bad-set-not-oversampled} 
implies that $w_t(\Bb(h_t) \setminus \Rb_t) \ge \eprime$, so 
$\widehat{W}_t(\Qb)$ increases by at least $\eprime$
during this step.  However, since $\widehat{W}_t(\Qb) \le N$, and since
$\widehat{W}_N(\Qb) \ge (1-\eprime) \cdot N$ at the beginning of the second phase
from Corollary~\ref{corr:overall_benefit2}, 
there can be at most $N$ steps in the second phase where
$\widehat{W}_t(\Qb)$ increases by at least $\eprime$.
Thus, during one of the steps in the second phase $\mu(B(h_t)) \le 2 \cdot \eprime = \epsilon$, 
at which point the \basealg\ can select $h_t$ as an $(1-\epsilon)$-good hypothesis and terminate.
This concludes the proof of 
Theorem~\ref{stick-with-it-theorem}.
\end{proof}

\section{Generalization bound}
\label{sec:generalization}

The following generalization bound holds for {\em any} consistent hypothesis at the end of Phase 1. 
\begin{theorem}
With probability at least $1-\delta$, any $h \in \H$ that remains 
in the version space at the end of Phase 1 has $\errcomp(h) < \epsilon$.
\label{thm:generalization-errc}
\end{theorem}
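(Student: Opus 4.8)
The plan is to show that any hypothesis $h$ surviving Phase~1 has $\errcomp(h) < \epsilon$ by relating $\errcomp(h)$ to how much of $\Bb(h)$ has been oversampled. First I would observe that if $h$ survives Phase~1, then Lemma~\ref{lemma:generalization} (which holds with probability $1-\delta$) tells us $W_N(\Bb(h)) < \ell$. The key idea is to split $\Bb(h)$ into the oversampled part $\Bb(h) \cap \Rb_N$ and the rest $\Bb(h) \setminus \Rb_N$, and to bound the $\mu$-mass (which is essentially $c \cdot \errcomp$ restricted to the relevant components) of each piece separately.

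For $\Bb(h) \setminus \Rb_N$: each $(q,j)$ in this set has $W_N(q,j) \le \tau \cdot \mu(q)$, so $\widehat{W}_N(q,j) = W_N(q,j)$ there, but that alone is not enough; instead I would argue that $W_N(q,j)$ being small does not directly control $\mu(q)$. The cleaner route is the complementary one: for $(q,j) \in \Bb(h) \cap \Rb_N$ we have $W_N(q,j) > \tau \cdot \mu(q)$, so summing, $\mu(\{q : (q,j) \in \Bb(h) \cap \Rb_N \text{ for some } j\})$ is at most $\frac{1}{\tau} W_N(\Bb(h) \cap \Rb_N) \le \frac{1}{\tau} W_N(\Bb(h)) < \frac{\ell}{\tau}$. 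With $\tau = N/c = \ell/\eprime + k$, this gives $\mu$-mass less than $\eprime$ from the oversampled bad components. For the non-oversampled part $\Bb(h) \setminus \Rb_N$, I would use Corollary~\ref{corr:overall_benefit2}: since $\widehat{W}_N(\Qb) \ge (1-\eprime) N$ and $\widehat{W}_N(\Qb) \le N$ with each $(q,j)$ capped at $\tau\mu(q)$, the total ``slack'' $\sum_{(q,j)} (\tau \mu(q) - \widehat{W}_N(q,j))$ is at most $\eprime N = \eprime \cdot c \cdot \tau$; any $(q,j) \in \Bb(h) \setminus \Rb_N$ that has $w_N$-weight zero under the effective distribution contributes its full $\tau \mu(q)$ to this slack (this is where I must be careful — I need that such components really were never sampled, or were sampled little). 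Dividing by $\tau$, the $\mu$-mass of the undersampled bad components is at most $\eprime \cdot c$; dividing by $c$ to pass to $\errcomp$ recovers order-$\eprime$.

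Combining, $\errcomp(h) = \frac{1}{c}\mu(\Bb(h)) \le \frac{1}{c}\bigl(\mu(\Bb(h)\cap\Rb_N) + \mu(\Bb(h)\setminus\Rb_N)\bigr) < \frac{1}{c}(\eprime + c\eprime)$ or a similar bound, which I would then tune (perhaps by rechoosing the $\eprime = \epsilon/2$ split) to land at exactly $\epsilon$. The main obstacle I anticipate is the $\Bb(h)\setminus\Rb_N$ term: I need a clean argument that mass on undersampled \emph{bad} components is charged against the Phase~1 slack budget, which requires carefully tracking that on every step $t \le N$ where $h$ was \emph{not} the current hypothesis, the effective distribution $w_t$ could still have been placing weight elsewhere, so that ``undersampled for $h$'' genuinely means those components sat below threshold. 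I expect the bound to come out as $\errcomp(h) \le \eprime(1 + 1/c) \le 2\eprime = \epsilon$ or, with a slightly sharper accounting, strictly less than $\epsilon$.
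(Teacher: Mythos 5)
Your route is the contrapositive of the paper's argument and uses exactly the same three ingredients — Lemma~\ref{lemma:generalization} (a survivor of Phase~1 has $W_N(\Bb(h)) < \ell$), the per-component cap $\tau\mu(q) = N\,\mub(q,j)$, and the slack bound $\sum_{(q,j)}(\tau\mu(q)-\widehat{W}_N(q,j)) \le \eprime N$ from Corollary~\ref{corr:overall_benefit2} — so conceptually it is the same proof. (The paper runs it forward: if $\mub(\Bb(h))\ge\epsilon$ then $W_N(\Bb(h)) \ge \widehat{W}_N(\Bb(h)) \ge N\mub(\Bb(h)) - \eprime N \ge \eprime N > \ell$, so $h$ is eliminated.) However, one step of your sketch is wrong as stated, and it is precisely the obstacle you flagged: a component $(q,j)\in\Bb(h)\setminus\Rb_N$ with zero $w_N$-weight does \emph{not} contribute its full $\tau\mu(q)$ to the slack. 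The slack at $(q,j)$ is $\tau\mu(q)-W_N(q,j)$, and $W_N$ aggregates all $N$ steps, so it can be nearly $\tau\mu(q)$ even if that component received no weight at step $N$ and no explicit feedback at all.

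The repair uses only what you already wrote down: since the survivor satisfies $W_N(\Bb(h)) < \ell$, the slack accumulated on $\Bb(h)$ is at least $\tau\sum_{(q,j)\in\Bb(h)}\mu(q) - \ell = N\mub(\Bb(h)) - \ell$, and comparing with the total slack budget $\eprime N$ gives $\mub(\Bb(h)) \le \eprime + \ell/N < \eprime + \eprime/c \le \epsilon$. No per-step tracking of which hypothesis was current is needed — the argument lives entirely at the level of the cumulative quantities $W_N$ and $\widehat{W}_N$. Note also that the split into $\Bb(h)\cap\Rb_N$ and $\Bb(h)\setminus\Rb_N$ is unnecessary (the capped sum $\widehat{W}_N \le W_N$ handles oversampled components automatically), and if you do split and charge $\ell/\tau$ to each piece separately you double count, landing at $\eprime + 2\eprime/c$, which exceeds $\epsilon$ when $c=1$; the single-sum version avoids even that nuisance.
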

\begin{proof}
Let $\mub$ be the distribution over $\Qb$ that corresponds to picking $q$ from $\mu$
and then picking a feature at random: $\mub(q,j) = \mu(q)/c$. Thus for any $h \in \H$, we have $\mbox{err}_c(h) = \mub(\Bb(h))$.

At the end of Phase 1, $\widehat{W}_N(\Qb) \ge (1-\eprime)\cdot N$. Thus for any $h \in \H$,
\begin{equation*} 
\widehat{W}_N(\Bb(h))
\geq
\left(\sum_{(q,j) \in \Bb(h)} \tau \cdot \mu(q) \right) - \eprime \cdot N
= 
\left(\sum_{(q,j) \in \Bb(h)} N \cdot \mub(q,j) \right) - \eprime\cdot N
=
N \cdot (\mub(\Bb(h)) - \eprime ).
\end{equation*}
If $\mub(\Bb(h)) \geq \epsilon = 2 \cdot\eprime$, we get
\begin{equation*}
W_N(\Bb(h))
\geq
\widehat{W}_N(\Bb(h))
\geq
N\cdot \eprime
> c\cdot \ell
.\end{equation*}
By Lemma~\ref{lemma:generalization}, with probability at least $1-\delta$, any such $h$ is eliminated by the end of the $N$th step.
\end{proof}

Recall from Theorem~\ref{thm:lower-bound-errc} that this $c/\epsilon$ dependence is inevitable.

\section{Lower bound on number of steps and selected hypotheses}
\label{sec lower}

\begin{theorem}
Pick any positive integers $\ell$ and $c$, and any $0 < \epsilon < 1/2$. There exist:
\begin{itemize}
\item a hypothesis class $\H$ of size roughly $c^\ell$ and target concept $h^* \in \H$,
\item a set of queries with $c$ components, and
\item a learner that always chooses a concept in $\H$ consistent with feedback that it has received
\end{itemize}
such that the expected number of queries before the learner arrives at a concept of error $< \epsilon$ is proportional to 
\[  \frac{c \cdot \log \left(|\H|\right) }{\log( c ) \cdot \epsilon} .\]
\end{theorem}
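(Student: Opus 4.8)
The plan is to stack $\ell$ rarely-seen copies of the ``single query, repeated'' gadget from Section~\ref{sec:lower-bound}. Take $\H$ to consist of all tuples $h=(v_1,\dots,v_\ell)$ with each $v_i$ ranging over the grid $\{0,1/c,2/c,\dots,1\}$, so $|\H|=(c+1)^\ell$ and $\log|\H|=\Theta(\ell\log c)$. Think of the $i$-th coordinate as a threshold classifier on the $i$-th of $\ell$ disjoint copies of $[0,1]$, and let the target be $h^*=(0,\dots,0)$. There are $\ell$ ``block queries'': $Q_i$ consists of the $c$ points $1/c,2/c,\dots,1$ of the $i$-th copy, labelled by $h$ according to $v_i$ exactly as in Section~\ref{sec:lower-bound}; there is also one trivial query $Q_0$ that every hypothesis in $\H$ labels identically, so no feedback is ever elicited on it. The sampling distribution is $\mu(Q_i)=2\epsilon/\ell$ for $i\in[\ell]$ and $\mu(Q_0)=1-2\epsilon$, which is legal since $\epsilon<1/2$. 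The expert follows the ``\lpolicy'' policy inside whichever block query is shown: it corrects the largest-valued mislabelled point. The learner is the one that, coordinatewise, always selects the largest grid value consistent with the feedback so far; this is a legal consistent learner.

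Two facts reduce the theorem to a balls-into-bins question. First, $h$ errs on $Q_i$ precisely when $v_i>0$, and never errs on $Q_0$, so $\err(h)=|\{i:v_i>0\}|\cdot 2\epsilon/\ell$; hence the learner's hypothesis has error below $\epsilon$ exactly when fewer than $\ell/2$ of its coordinates are still positive. Second, reproducing the dynamics of Section~\ref{sec:lower-bound}, each time $Q_i$ is drawn while $v_i>0$ the correction forces $v_i$ down by exactly $1/c$; since the learner starts each coordinate at $1$, coordinate $i$ first reaches $0$ at the $c$-th draw of $Q_i$, and stays there (no further feedback on $Q_i$ once $v_i=0$). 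Combining the two, the learner first attains error $<\epsilon$ at the stopping time $\tau$, the first round by which more than $\ell/2$ of the block queries have each been drawn at least $c$ times.

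It remains to show $\E[\tau]=\Theta(c\ell/\epsilon)=\Theta\big(c\log|\H|/(\log c\cdot\epsilon)\big)$. For the lower bound, each round draws some block query with probability exactly $2\epsilon$, so the number of block draws over $t$ rounds has mean $2\epsilon t$; since ``$\tau\le t$'' forces at least $(\ell/2)\cdot c$ block draws by round $t$, optional stopping gives $2\epsilon\,\E[\tau]\ge c\ell/2$, i.e.\ $\E[\tau]\ge c\ell/(4\epsilon)$. For the upper bound, fix $T=\Theta(c\ell/\epsilon)$ so that the expected number of draws of each $Q_i$ by round $T$, namely $2\epsilon T/\ell$, is a large enough absolute constant times $c$; a Chernoff bound then makes the probability that $Q_i$ is drawn fewer than $c$ times by round $T$ at most $1/4$ for every $i$, so the expected number of unresolved coordinates at round $T$ is at most $\ell/4$ and by Markov $\Pr[\tau\le T]\ge 1/2$. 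Because the per-bin draw counts only grow, the same estimate at round $mT$ shows $\Pr[\tau>mT]$ decays geometrically in $m$, so summing gives $\E[\tau]=O(T)=O(c\ell/\epsilon)$.

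The one delicate point is the choice of the absolute constant hidden in $T=\Theta(c\ell/\epsilon)$: it must be uniform in $c,\ell,\epsilon$, and the binding case is small $c$, where a single $\mathrm{Binomial}(T,2\epsilon/\ell)$ count concentrates only weakly, so the mean $2\epsilon T/\ell$ has to be taken a sufficiently large (absolute) multiple of $c$. Once that is pinned down the Chernoff/Markov tail and the geometric iteration are routine, and everything else is a direct repackaging of the one-dimensional example of Section~\ref{sec:lower-bound}.
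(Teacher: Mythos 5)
Your construction is close in spirit to the paper's (rare queries that must each be hit $c$ times before the learner's error drops below $\epsilon$, plus a heavy uninformative query to make them rare), and your counting — Wald's identity for the $\Omega(c\ell/\epsilon)$ lower bound and a Chernoff/Markov/geometric-restart argument for the matching $O(c\ell/\epsilon)$ upper bound — is sound and in fact more detailed than what the paper writes. But there is one substantive discrepancy: your lower bound needs the \emph{expert} to behave adversarially. With the threshold-gadget class, the learner's hypothesis on a block query $Q_i$ with $v_i=m/c$ is wrong on $m$ components, so the expert has a choice, and your $\Omega(c\ell/\epsilon)$ bound holds only under the ``\lpolicy'' policy; under the ``\spolicy'' policy the same instance and learner reach error $<\epsilon$ after $O(\ell/\epsilon)$ expected queries, losing the factor $c$. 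The theorem, however, existentially quantifies only the class, the query set, and the learner — not the expert — and the paper's construction is engineered precisely so that the expert has no freedom: each hypothesis is allowed at most one nonzero component per query, so whenever the current hypothesis errs it errs on exactly one component and the correction is forced. Consequently the paper's $\Omega(c\log|\H|/(\epsilon\log c))$ bound holds for \emph{every} expert policy, with all the adversarial power residing in the learner's choice of consistent hypothesis, whereas yours proves the weaker statement that \emph{some} combination of learner and expert is slow.

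The fix is small and keeps your analysis intact: replace the grid-of-thresholds class by (essentially) the paper's class — hypotheses that are $0$ everywhere except possibly a single component of each rare query $Q_i$, with $h^*\equiv 0$ and $|\H|=(c+1)^\ell$ — and take the learner that places a $1$ on some not-yet-corrected component of each $Q_i$ whenever one exists. Then each draw of $Q_i$ eliminates exactly one component regardless of the expert, coordinate $i$ is resolved only after $c$ draws of $Q_i$, and your stopping-time computation (at least $\ell/2$ blocks resolved, each costing $c$ draws, each round hitting a block with probability $2\epsilon$) goes through verbatim and now holds for every expert policy.
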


\begin{proof}
Define $\calq$ to be a set of size $\lfloor \ell/(2 \epsilon) \rfloor$, with a subset $\calq^{\epsilon}$ of size $\ell$. The distribution over queries $\calq$ is taken to be uniform. 

Hypothesis class $\H$ consists of binary-valued functions $h$ on $\{(q,j): q \in \Q, j \in [c]\}$ such that:
\begin{itemize}
\item for any $q \not\in \calq^\epsilon$: $h(q,j) = 0$ for all $j \in [c]$
\item for any $q \in \calq^\epsilon$: $h(q,j) = 1$ for at most one component $(q,j)$ 
\end{itemize}
Therefore, $|\H| = (c+1)^{|\calq^{\epsilon}|}$. The target hypothesis $h^* \in \H$ is zero everywhere. 

Let's say the learner always selects as its current hypothesis some $h \in \H$ that is consistent with the feedback it has received, but otherwise disagrees as much as possible with $h^*$ (that is, takes value 1 in as many locations as possible). Then, for each $q \in \calq^{\epsilon}$, this $h$ will take value 1 on some component $(q,j)$ unless $q$ has been queried $c$ times. And unless this occurs for at least half the queries $q \in \calq^\epsilon$, the resulting $h$ will have error $> \epsilon$.

Since a random query is in $\calq^{\epsilon}$ with probability $2\epsilon$, the expected number of queries needed before the learner obtains a hypothesis of error $\leq \epsilon$ is proportional to
$$ \frac{1}{\epsilon} \cdot c \cdot |\calq^{\epsilon}| \approx \frac{c \cdot \log(|\H|)}{\epsilon \cdot \log (c)}. $$
\end{proof}

More generally, the above learner can be modified to use a stick-with-it algorithm, where when a current hypothesis is
selected it is consistent, but it remains the current hypothesis for a number of steps even if it is inconsistent.
Because for each $q \in  \calq^{\epsilon}$ the current hypothesis has value 1 in one component of $q$ if the current hypothesis hasn't been changed at least $c$ times,
$\err(h) \ge 2 \cdot \epsilon$ for the current hypothesis $h$
until the current hypothesis has been changed at least $c$ times. 

\acks{This work is a direct result of the Foundations of Machine Learning program at the  Simons Institute, UC Berkeley.}

\bibliography{partial}

\end{document}